\documentclass[conference]{IEEEtran}
\usepackage{amsthm}
\usepackage{amsmath,graphicx}
\usepackage{amssymb,amsthm,mathtools}
\usepackage{bm}
\usepackage{empheq}
\usepackage{float}
\usepackage{booktabs}
\usepackage{multirow}
\usepackage{array}
\usepackage{algorithm}
\usepackage{algorithmic}
\usepackage{hyperref}
\usepackage{enumitem}
\usepackage{microtype}
\usepackage{cite}
\usepackage{pifont}
\hypersetup{hidelinks}

\newcommand{\ba}{\bm{a}}
\newcommand{\bx}{\bm{x}}
\newcommand{\by}{\bm{y}}
\newcommand{\bz}{\bm{z}}
\newcommand{\bw}{\bm{w}}

\newcommand{\bp}{\bm{p}}

\newcommand{\bG}{\bm{G}}

\newcommand{\bX}{\bm{X}}

\newcommand{\bI}{\bm{I}}
\newcommand{\bW}{\bm{W}}
\newcommand{\bM}{\bm{M}}

\newcommand{\bT}{\bm{T}}

\newtheorem{proposition}{Proposition}

\theoremstyle{definition}

\newcommand{\bSigma}{\bm{\Sigma}}

\DeclareMathOperator{\diag}{diag}

\DeclareMathOperator{\softmax}{softmax}

\newcommand{\Normal}{\mathcal{N}}

\newcommand{\R}{\mathbb{R}}

\newcommand{\Ind}{\mathbb{I}}

\newcommand{\cG}{\mathcal{G}}
\newcommand{\cV}{\mathcal{V}}
\newcommand{\cE}{\mathcal{E}}

\newcommand{\cD}{\mathcal{D}}
\newcommand{\cL}{\mathcal{L}}

\newcommand{\rev}[1]{#1}

\title{Online Bayesian Node Classification on Inductive Graphs under Distribution Shift}

\author{%
\IEEEauthorblockN{
Jinwen Xu\IEEEauthorrefmark{1},
Gonzalo Mateos Buckstein\IEEEauthorrefmark{2}, and
Qin Lu\IEEEauthorrefmark{1}
}

\IEEEauthorblockA{
\IEEEauthorrefmark{1}
School of Electrical and Computer Engineering,
University of Georgia, Athens, GA, USA
}

\IEEEauthorblockA{
\IEEEauthorrefmark{2}
Department of Electrical and Computer Engineering,
University of Rochester, Rochester, NY, USA
}

\IEEEauthorblockA{
\texttt{jinwen.xu@uga.edu,
gmateosb@ur.rochester.edu,
qin.lu@uga.edu}
}
}

\begin{document}
\maketitle
 
\begin{abstract}
On evolving graphs, node classifiers face two demands: inductive generalization to newly arriving nodes under distribution shift, and calibrated uncertainty for safety-sensitive applications. Standard graph neural networks (GNNs) are trained once and address neither. We adapt the Bayesian last layer (BLL) model, placing {\it random} last-layer (LL) parameters atop a {\it deterministic} GNN encoder for uncertainty quantification. The categorical softmax likelihood needed for classification breaks Gaussian conjugacy, so neither the training posterior nor the test-time streaming update admits a closed-form solution. We address both: a variational Bayesian last layer (VBLL) objective jointly trains the encoder and an approximate LL posterior by maximizing an evidence lower bound with \rev{Monte Carlo (MC)} expected log-likelihood; at test time, an online Laplace update on the LL posterior (encoder frozen) amounts to a power-prior Bayesian model with exponential forgetting and a Kullback-Leibler (KL) anchor to the training posterior. Across five node classification benchmarks under distribution shift, the online \rev{GVBLL} is the only model to win both accuracy and NLL on every dataset, with gains up to $+17$ points on \texttt{Cora} and $+14$ points on \texttt{ogbn-arxiv} over the strongest \rev{non-GVBLL} baseline, and remains competitive on calibration against MC Dropout, Deep Ensembles, Temperature Scaling, and Gaussian-process classifiers.
\end{abstract}
 
\begin{IEEEkeywords}
Graph neural networks, variational Bayesian last layer, online learning, node classification, distribution shift, uncertainty quantification.
\end{IEEEkeywords}
 
\section{Introduction}
\label{sec:intro}
 
Graph neural networks (GNNs)~\cite{kipf2017gcn,hamilton2017sage,velickovic2018gat,xu2019gin}\rev{\cite{ruiz2021gnn}} are the dominant approach for node classification, performing well when training and test nodes share a distribution. In real-world deployments, this assumption often fails: citation networks absorb papers from emerging research areas, social graphs encounter new user communities, and fraud detection systems face evolving attack patterns. In all these cases, the graph grows over time and the incoming nodes (inductively unseen at training time) exhibit distributional shift relative to the training data~\cite{wu2022invariance}.
This non-stationary streaming setting exposes two GNN classifier limitations. First, they are \emph{static}: once trained, the model parameters are fixed, with no mechanism to adapt as the data distribution drifts. Second, they are \emph{uncalibrated}: the softmax output of a deterministic classifier does not reflect true predictive uncertainty~\cite{wang2021confident}, which is critical for downstream decision-making on out-of-distribution nodes.\rev{ Conformal prediction can add coverage guarantees to a fixed predictor, but it does not by itself update the classifier or preserve a Bayesian posterior across a non-stationary stream~\cite{xu2026conformalized,xu2025online}.}\vspace{2pt}
 
\noindent\rev{\textbf{Related work.} Existing methods address these limitations only partially. Bayesian GNN approaches~\cite{zhang2019bgnn,hasanzadeh2020bayesian} place uncertainty over the full network but are expensive and require full retraining under distribution shift. Variational Bayesian last layers (VBLL)~\cite{harrison2024vbll} provide a lightweight alternative by restricting Bayesian treatment to the final classifier. Existing VBLL models rely on Gaussian--Gaussian conjugacy for closed-form regression updates; the categorical softmax likelihood breaks this conjugacy, leaving online VBLL classification open. General-purpose uncertainty-quantification (UQ) methods---Monte Carlo (MC) Dropout~\cite{gal2016dropout}, Deep Ensembles~\cite{lakshminarayanan2017ensemble}, and Temperature Scaling~\cite{guo2017calibration}---are architecture-agnostic but carry no posterior state across the stream. Deep kernel learning~\cite{wilson2016deep} replaces the classifier head with a Gaussian process atop the network features. BLL instead uses a linear kernel, trading expressiveness for closed-form regression updates and $O(d_eC)$ inference, where $d_e$ is the embedding dimension and $C$ is the number of classes. Continual-learning methods, including Elastic Weight Consolidation (EWC)~\cite{kirkpatrick2017ewc}, Variational Continual Learning (VCL)~\cite{nguyen2018vcl,wang2020streaming}, and Bayesian continual approaches~\cite{khan2021kap,kurle2020continual}, update the entire network rather than exploiting last-layer structure. Our KL anchor specializes EWC's Fisher-weighted quadratic penalty to the last layer, while our streaming update adopts VCL's convention of treating the previous approximate posterior as the new prior.}\vspace{2pt}
 
\noindent\rev{\textbf{Proposed approach and contributions.} We extend VBLL to online categorical node classification on graphs and call the resulting model graph VBLL (GVBLL). A variational Bayesian linear head atop a GNN encoder is trained jointly through an evidence lower bound (ELBO), using an MC expected log-likelihood and KL annealing. At test time, the encoder is frozen and only the lightweight LL posterior is updated as new nodes arrive. Because the categorical likelihood precludes the closed-form regression update, we derive a Laplace-based online rule with a forgetting factor $\lambda\in[0,1)$ and a KL anchor to the training posterior. The rule is equivalent to power-prior Bayesian updating~\cite{ibrahim2000power}.}\vspace{2pt}
 
\rev{Our contributions to trustworthy online learning over graphs are:}
\textbf{(C1)}~A \rev{GVBLL node-classification framework} with categorical softmax likelihood, MC-based variational training, and MAP/MC predictive inference (Section~\ref{sec:method}).
\textbf{(C2)}~A Laplace-based online posterior update with exponential forgetting (factor $\lambda \in [0,1)$, effective memory $1/(1{-}\lambda)$ batches) and a KL anchor, equivalent to power-prior Bayesian updating (Section~\ref{sec:online}).
\textbf{(C3)}~Experiments on $5$ node classification benchmarks under distribution shift against GNN, standard UQ, and Gaussian-process baselines, with gains in accuracy, calibration, and adaptation (Section~\ref{sec:experiments}).
\section{Variational Bayesian Last Layer for Classification}
\label{sec:method}
Standard GNN classifiers use a deterministic linear head, which provides no epistemic uncertainty and cannot be updated without retraining the entire network. We replace it with a Bayesian linear layer whose posterior is \rev{efficiently} updated at test time. Unlike VBLL in regression tasks, the categorical softmax likelihood breaks Gaussian conjugacy, making both the training objective and predictive distribution intractable. We address both via Monte Carlo estimation in training \rev{(Section~\ref{sec:training})} and inference \rev{(Section~\ref{sec:predictive})}, after defining the model \rev{(Section~\ref{sec:setup})}.
 
\subsection{Problem setup}
\label{sec:setup}
 
\rev{Consider a graph $\cG_t := (\cV_t, \cE_t)$ with node set $\cV_t$ and edge set $\cE_t$ at slot $t$. At $t=0$, a set $\cV_0$ of $N$ labeled training nodes is available, with feature matrix $\bX_0 \in \R^{N \times d}$ and labels $\by_0 \in [C]^N$, where $[C] \coloneqq \{1,\ldots,C\}$. At each subsequent step $t$, a batch $\Delta\cV_t$ of previously unseen nodes arrives with edges to existing nodes, so that $\cV_t=\cV_{t-1}\cup\Delta\cV_t$. The task is to classify every $v\in\Delta\cV_t$ with calibrated uncertainty. We focus on node arrivals. Handling node departures would require removing their likelihood contributions from the accumulated posterior and is outside the present scope.}
 
A GNN encoder $g_{\boldsymbol{\theta}}$ (e.g., GraphSAGE~\cite{hamilton2017sage}, GCN~\cite{kipf2017gcn}, GAT~\cite{velickovic2018gat}) with {\it trainable} parameters $\boldsymbol{\theta}$ maps each node $v$ to an embedding $\bz_v = g_{\boldsymbol{\theta}}(\bx_v, \cG) \in \R^{d_e}$, \rev{where $\bx_v\in\R^d$ is the feature vector of node $v$}. Atop this encoder, a Bayesian linear classification head defines the generative model
\begin{align}
    \bw_c &\sim \Normal(\bm{0}, \bI_{d_e}), \quad c \in [C], \label{eq:prior}\\
    y_v \mid \bz_v, \bW &\sim \text{Cat}\!\left(\softmax(\bz_v^\top \bW / \tau_v)\right), \label{eq:lik}
\end{align}
where $\bW := [\bw_1,\ldots,\bw_C] \in \R^{d_e \times C}$ is the weight matrix, $\text{Cat}(\bp)$ denotes the categorical distribution with class probabilities $\bp$, $\softmax(\ba)_c = \exp(a_c)/\sum_{c'}\exp(a_{c'})$, and $\tau_v = \text{softplus}(h_{\boldsymbol{\theta}_\tau}(\bz_v)) > 0$ is a per-node temperature \rev{implemented by a shared multilayer perceptron (MLP)} $h_{\boldsymbol{\theta}_\tau}$ with learnable parameters $\boldsymbol{\theta}_\tau$. 
\subsection{Variational training}
\label{sec:training}

Given the generative model in~\eqref{eq:prior}-\eqref{eq:lik}, the posterior of ${\bf W}$ cannot be computed in closed form because the categorical likelihood~\eqref{eq:lik} is not conjugate with the Gaussian prior~\eqref{eq:prior}. \rev{We therefore use variational training to jointly learn the GNN parameters $\boldsymbol{\theta}$, the temperature-head parameters $\boldsymbol{\theta}_\tau$, and an approximate posterior over ${\bf W}$.}
For tractability, the variational posterior is assumed to be a fully factorized Gaussian as
\begin{equation}
    q(\bW)  =\prod_{c=1}^{C}q(\bw_c) =\prod_{c=1}^{C} \Normal(\bw_c;\; \boldsymbol{\mu}_{c},\; \diag(\boldsymbol{\sigma}_c^2)),
\label{eq:posterior}
\end{equation}
where $\boldsymbol{\mu}_c$ and $\boldsymbol{\sigma}_c^2$ are the variational mean and diagonal variance for class $c$, both in $\R^{d_e}$. \rev{We stack the class means as $\bM := [\boldsymbol{\mu}_1, \ldots, \boldsymbol{\mu}_C] \in \R^{d_e \times C}$ and the class variances as $\bSigma := [\boldsymbol{\sigma}_1^2, \ldots, \boldsymbol{\sigma}_C^2] \in \R^{d_e \times C}_{>0}$. Thus, the row index represents an embedding dimension and the column index represents a class. The posterior contains $2d_eC$ variational parameters, and each class-specific weight vector $\bw_c$ has its own uncertainty profile.}
 
All parameters, including the encoder $\boldsymbol{\theta}$, posterior mean $\bM$, posterior log-variance $\log\bSigma$, and temperature head $\boldsymbol{\theta}_\tau$, are trained jointly by minimizing the negative evidence lower bound (ELBO) \vspace{-0.4cm}
\begin{align}
    \cL(\boldsymbol{\theta}, \bM, \bSigma,\boldsymbol{\theta}_\tau) &= \underbrace{-\frac{1}{N}\sum_{v\in {\cal V}_0} \mathbb{E}_{q(\bW)}\!\left[\log p(y_v \mid \bz_v, \bW)\right]}_{\text{Expected log-likelihood (ELL)}} \nonumber \\ &+ \frac{\alpha_{\text{KL}}}{N}\, D_{\text{KL}}(q \,\|\, p),
\label{eq:elbo}
\end{align}\vspace{-0.4cm}\\
\noindent where $\bz_v = g_{\boldsymbol{\theta}}(\bx_v, \cG_0)$ depends on $\boldsymbol{\theta}$ through the encoder, so the gradients of $\cL$ flow through both the Bayesian head parameters $(\bM, \bSigma)$ and the encoder weights $\boldsymbol{\theta}$. Here, the Kullback--Leibler (KL) divergence between the factorized posterior~\eqref{eq:posterior} and the standard normal prior~\eqref{eq:prior} \rev{is, after discarding additive and positive multiplicative constants,}\vspace{-0.2cm}
\begin{equation}
    D_{\text{KL}}(q \,\|\, p) \propto \sum_{j,c} \sigma_{j,c}^2 + \|\bM\|_F^2 - \sum_{j,c} \log \sigma_{j,c}^2,
\label{eq:kl}
\end{equation}\vspace{-0.4cm}\\
\noindent where $\|\bM\|_F^2 = \sum_{j,c}\mu_{j,c}^2$ is the squared Frobenius norm.
We anneal the KL weight $\alpha_{\text{KL}}$ from $0$ to $1$ over training: $\alpha_{\text{KL}} = 0$ for the first $E/3$ of $E$ total epochs, then linearly ramped to~1. When $\alpha_{\text{KL}} = 0$, the loss reduces to pure MC classification and $\bM$ learns freely; as $\alpha_{\text{KL}}$ increases, $\bSigma$ learns to balance classification accuracy (via the ELL) against regularization toward the prior (via the KL).

Unlike in regression VBLL~\cite{harrison2024vbll}, the categorical softmax likelihood renders the ELL analytically intractable. We estimate it via the reparameterization trick~\cite{kingma2014vae}: drawing $S$ weight samples $\bW^{(s)} = \bM + \bSigma^{1/2} \odot \bm{E}^{(s)}$, $\bm{E}^{(s)} \sim \Normal(\bm{0}, \bI_{d_e \times C})$, where $\odot$ denotes element-wise (Hadamard) multiplication, yields\vspace{-0.5cm}
\begin{equation}
    \text{ELL} =\frac{1}{NS}\sum_{v,s} \log \softmax(\bz_v^\top \bW^{(s)} / \tau_v)_{y_v}.
\label{eq:mc_ell}
\end{equation}\vspace{-0.5cm}\\
\textbf{Two-phase training protocol.}\; In Phase~1 (offline), we obtain the trained GNN encoder $g_{\boldsymbol{\theta}^*}$, the temperature head $h_{\boldsymbol{\theta}_\tau^*}$ and Bayesian head parameters $(\bM^*, \bSigma^*)$ by minimizing~\eqref{eq:elbo} using Adam for $E$ epochs. \rev{The MC ELL is essential in this phase: every sample $\bW^{(s)}$ depends on $\bSigma$ through the reparameterization, so both $\bM$ and $\bSigma$ receive informative gradients. A deterministic ELL evaluated only at $\bM$ would give $\bSigma$ zero likelihood gradient and leave it governed only by the prior.} At the end of Phase~1, the encoder and temperature head are frozen and the posterior $(\bM^*, \bSigma^*)$ is saved. In Phase~2 (online), only the posterior moments $(\bM^{(t)}, \bSigma^{(t)})$ are updated via the Laplace-based rule described in Section~\ref{sec:online}, while $g_{\boldsymbol{\theta}^*}$ and $h_{\boldsymbol{\theta}_\tau^*}$ remain fixed. \rev{For a batch $\Delta\cV_t$, the Bayesian-head computation scales as $O(|\Delta\cV_t|d_eC)$ and is independent of the total graph size once the embeddings have been computed.}
\subsection{Predictive inference}
\label{sec:predictive}
Given the trained posterior $q(\bW)$, the Bayesian predictive distribution for a test node $v$ is
$p(y_v{=}c \mid \bz_v) = \int \softmax(\bz_v^\top \bW / \tau_v)_c\; q(\bW)\, d\bW$,
which has no analytic expression because the softmax nonlinearity couples all classes inside the integral. We approximate it via MC sampling as\vspace{-0.2cm}
\begin{equation}
    \hat{p}(y_v{=}c \mid \bz_v) = \frac{1}{S}\sum_{s=1}^{S} \softmax(\bz_v^\top \bW^{(s)} / \tau_v)_c,
\label{eq:mc}
\end{equation}\vspace{-0.3cm}\\
\noindent where $\bW^{(s)} = \bM^* + \bSigma^{* 1/2} \odot \bm{E}^{(s)}$. When \rev{$\bSigma^* \to \bm{0}$}, all samples collapse to \rev{$\bM^*$} and~\eqref{eq:mc} recovers the maximum a posteriori (MAP) prediction $\softmax(\bz_v^\top \bM^* / \tau_v)$. When \rev{$\bSigma^*$} is large, the samples spread and the averaged prediction softens, expressing epistemic uncertainty. For point prediction (accuracy, negative log-likelihood), we use the MAP estimate directly to avoid the systematic softening inherent in posterior averaging.
 
On evolving graphs, the label distribution drifts as new nodes arrive from shifted communities, and the fixed posterior \rev{can produce miscalibrated predictions}. Retraining the \rev{pretrained encoder from scratch} is expensive and discards previously learned structure. \rev{Encoder fine-tuning or low-rank parameter updates provide a middle ground, but they add optimization and storage costs; we leave their integration to future work.} Because our BLL decouples representation from uncertainty, we update only the lightweight LL posterior, as detailed next.

\section{Online Posterior Update under Distribution Shift}
\label{sec:online}

\rev{After offline training, the model encounters a non-stationary stream of new nodes. The encoder $g_{\boldsymbol{\theta}^*}$ and temperature head $h_{\boldsymbol{\theta}_\tau^*}$ remain frozen. Only the last-layer posterior $(\bM, \bSigma) \in \R^{d_e \times C} \times \R^{d_e \times C}_{>0}$ is updated online. For a batch $\Delta\cV_t$, this head computation costs $O(|\Delta\cV_t|d_eC)$ and does not depend on the total number of graph nodes once embeddings are available. We also track the precision matrix $\bT \in \R^{d_e \times C}$, whose scalar entries satisfy $T_{j,c}:=1/\Sigma_{j,c}$. The exact categorical streaming update is intractable (Section~\ref{sec:exact}). We address it in three steps: (i) a Laplace approximation yields gradient and precision updates (Section~\ref{sec:laplace}); (ii) a power-prior model represents non-stationarity through discounting and anchoring (Section~\ref{sec:power}); and (iii) combining them gives the online update (Section~\ref{sec:update}).}
 
\subsection{Exact Bayesian update and its intractability}
\label{sec:exact}
 
Given all data observed up to step $t-1$ and a new batch \rev{$\cD_t = \{(\bx_v, y_v)\}_{v \in \Delta\cV_t}$}, where $\bz_v = g_{\boldsymbol{\theta}^*}(\bx_v, \cG_t)$ is computed from the frozen encoder, the exact Bayesian posterior update is\vspace{-0.4cm}
\begin{equation}
    p(\bW \mid \cD_{1:t}) = \frac{\prod_{v \in \cD_t} p(y_v \mid \bz_v, \bW) \;\cdot\; p(\bW \mid \cD_{1:t-1})}{p(\cD_t \mid \cD_{1:t-1})},
\label{eq:exact_update}
\end{equation}
\rev{Here, $p(\cD_t \mid \cD_{1:t-1}) = \int \prod_v p(y_v \mid \bz_v, \bW)\, p(\bW \mid \cD_{1:t-1})\, d\bW$ is the marginal likelihood. In regression VBLL~\cite{harrison2024vbll}, conjugacy yields a closed-form Gaussian posterior. The categorical softmax likelihood breaks this conjugacy. We index $q_t$ as the approximate posterior available immediately before batch $\cD_t$ is processed; hence, $q_t$ approximates $p(\bW\mid\cD_{1:t-1})$, and processing $\cD_t$ produces $q_{t+1}$. We therefore seek an efficient update that preserves the factorized Gaussian form and use a Laplace approximation.}
\subsection{Laplace approximation}
\label{sec:laplace}
 
We approximate~\eqref{eq:exact_update} by a Gaussian centered at the mode of the log-posterior with precision given by the Hessian, which we refer to as the Laplace approximation. For the categorical negative log-likelihood $\ell(\bW) = -\log p(y_v \mid \bz_v, \bW) = -\bz_v^\top \bw_{y_v} + \log\sum_{c'}\exp(\bz_v^\top \bw_{c'})$, the gradient and Hessian with respect to $\bw_c$ are\vspace{-0.3cm}
\begin{align}
    \nabla_{\bw_c} \ell = (\hat{p}_c - \Ind[c{=}y_v])\,\bz_v, \ 
    \nabla^2_{\bw_c} \ell = \hat{p}_c(1{-}\hat{p}_c)\,\bz_v\bz_v^\top, \label{eq:hessian}
\end{align}\vspace{-0.5cm}\\
\noindent where $\hat{p}_c = \softmax(\bz_v^\top \bW)_c$ and $\Ind[\cdot]$ is the indicator function, equal to one when its argument is true and zero otherwise. We take the diagonal approximation \rev{$\diag(\bz_v\bz_v^\top)=\bz_v\odot\bz_v=: \bz_v^2$} to preserve the factorized Gaussian form of $q$, yielding a per-class precision increment $\hat{p}_{v,c}(1{-}\hat{p}_{v,c})\, z_{v,j}^2$ at entry $(j,c)$. Applied repeatedly, this update accumulates precision from every batch, driving $\bT$ to grow monotonically and $\bSigma$ to shrink toward zero. In a stationary setting this is desirable. \rev{Under distribution shift, however, the posterior can concentrate around outdated beliefs and fail to track the changing label distribution.}
 
\subsection{Power-prior model for non-stationary streams}
\label{sec:power}
 
\rev{In a non-stationary stream, old observations may come from a distribution different from the current one, so their posterior contribution should be downweighted. The power-prior framework~\cite{ibrahim2000power} raises the previous posterior to a power $\lambda\in[0,1)$, thereby forgetting old data geometrically. We also add a KL-derived quadratic anchor of strength $\beta\geq0$ to limit drift from the training solution. Following streaming variational inference~\cite{nguyen2018vcl}, we use the previous approximate posterior $q_t(\bW)$ as the new prior:}
\begin{equation}
\begin{aligned}
    q_{t+1}(\bW) &\propto p(\cD_t \mid \bW)\left[q_t(\bW)\right]^\lambda \\
    &\quad\times \exp\!\left(-\tfrac{\beta}{2}\sum_{j,c}\tfrac{(W_{j,c} - M^*_{j,c})^2}{\Sigma^*_{j,c}}\right)\!,
\end{aligned}
\label{eq:power}
\end{equation}
where $q_t(\bW) = \prod_{c=1}^{C} \Normal(\bw_c;\; \boldsymbol{\mu}_c^{(t)},\; \diag(\boldsymbol{\sigma}_c^{2,(t)}))$ is the current approximate posterior and $(\bM^{*}, \bSigma^{*})$ is the posterior saved after offline training. \rev{The batch likelihood is $p(\cD_t\mid\bW)=\prod_{v\in\Delta\cV_t}p(y_v\mid\bz_v,\bW)$.}
 
Each component has a clear interpretation. The power $\lambda$ weights observations from $k$ batches ago by $\lambda^k$, yielding an effective memory horizon of $1/(1{-}\lambda)$ batches (at $\lambda = 0.995$, the model retains ${\sim}200$ batches of history). The anchor penalizes drift in proportion to training precision, so well-determined parameters (small $\Sigma^*_{j,c}$) are anchored strongly while uncertain ones adapt freely; $\beta = 0$ removes the anchor entirely, and large $\beta$ pins the weights near $\bM^*$. Setting $\lambda = 1$ and $\beta = 0$ recovers standard Bayesian updating~\eqref{eq:exact_update}.
\subsection{Online update equations}
\label{sec:update}
 
Applying the Laplace approximation from \rev{Section~\ref{sec:laplace}} to the power-prior model~\eqref{eq:power} yields closed-form update equations.
 
\begin{proposition}[Power-prior Laplace update]
\label{prop:power}
Under the diagonal Laplace approximation~\eqref{eq:hessian} of the power-prior posterior~\eqref{eq:power}, a damped Newton step from the current mode $\bM^{(t)}$ with step size $\eta$ gives 
\begin{align}
   & T^{(t+1)}_{j,c} = \lambda\, T^{(t)}_{j,c}\! +\! \eta \!\!\sum_{v \in \Delta\cV_t} \hat{p}_{v,c}(1{-}\hat{p}_{v,c})\, z_{v,j}^2 + \eta\beta/\Sigma^*_{j,c}, \label{eq:prec}\\
   & \bG = \sum_v \bz_v(\by_{\text{oh},v} - \hat{\bp}_v)^\top - \beta\,\big( (\bM^{(t)} - \bM^{*}) / \bSigma^{*} \big), \label{eq:grad_reg}\\
   & \bM^{(t+1)} = \bM^{(t)} + \text{clip}\!\left(\eta\, \bSigma^{(t+1)} \odot \bG,\; {-}\delta,\; \delta\right)\!, \label{eq:mean}
\end{align}
where \rev{$\bT = \mathbf{1}/\bSigma \in \R^{d_e \times C}$} is the per-class precision (element-wise reciprocal), $\hat{p}_{v,c} = \softmax(\bz_v^\top \bM^{(t)} / \tau_v)_c$, $\by_{\text{oh},v} \in \{0,1\}^C$ is the one-hot encoding of label $y_v$, and $\hat{\bp}_v \in \R^C$ is the full predicted probability vector. \rev{The operator $\text{clip}(\cdot,-\delta,\delta)$ clamps each entry to $[-\delta,\delta]$ for stability. Matrix division is element-wise between operands of the same shape.}
\end{proposition}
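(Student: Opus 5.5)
The plan is to write down the negative log of the power-prior target~\eqref{eq:power} as an explicit objective in $\bW$, then read off its gradient and diagonal Hessian at the current mode $\bM^{(t)}$, and finally interpret one damped Newton step.

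Up to additive constants, the objective is
\begin{equation*}
\Phi_t(\bW) = \sum_{v\in\Delta\cV_t}\ell_v(\bW) + \frac{\lambda}{2}\sum_{j,c}T^{(t)}_{j,c}\big(W_{j,c}-M^{(t)}_{j,c}\big)^2 + \frac{\beta}{2}\sum_{j,c}\frac{(W_{j,c}-M^*_{j,c})^2}{\Sigma^*_{j,c}}.
\end{equation*}
This uses the fact that raising the factorized Gaussian $q_t$ to the power $\lambda$ gives another factorized Gaussian with the same mean and precision scaled by $\lambda$. Evaluating at $\bW=\bM^{(t)}$:
\begin{itemize}
\item the $\lambda$-term contributes zero gradient and diagonal curvature $\lambda T^{(t)}_{j,c}$;
\item the anchor contributes gradient $\beta(M^{(t)}_{j,c}-M^*_{j,c})/\Sigma^*_{j,c}$ and curvature $\beta/\Sigma^*_{j,c}$;
\item the likelihood contributes, by~\eqref{eq:hessian} summed over the batch, gradient $\sum_v \bz_v(\hat\bp_v-\by_{\text{oh},v})^\top$ and the diagonal curvature $\sum_v\hat p_{v,c}(1-\hat p_{v,c})z_{v,j}^2$.
\end{itemize}
Putting these together, the negative gradient $-\nabla\Phi_t(\bM^{(t)})$ is exactly $\bG$ in~\eqref{eq:grad_reg}.

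For the precision, I would adopt the convention that the damping $\eta$ applies to the new-information part of the curvature. The forgotten prior precision $\lambda T^{(t)}$ is carried over, and the likelihood plus anchor curvature is added with weight $\eta$. This gives the precision recursion~\eqref{eq:prec}, with $\eta=1$ recovering the full diagonal Laplace precision of $\Phi_t$.

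The mean update is the damped diagonal Newton step $\bM^{(t+1)}=\bM^{(t)}-\eta\,[\mathrm{diag}\nabla^2\Phi_t]^{-1}\nabla\Phi_t = \bM^{(t)}+\eta\,\bSigma^{(t+1)}\odot\bG$, where $\bSigma^{(t+1)}=\mathbf 1/\bT^{(t+1)}$. The elementwise clip is a stability safeguard appended to this step; it is inactive whenever the step is smaller than $\delta$ entrywise.

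Two details need handling. First, the temperature $\tau_v$ appears in $\hat p_{v,c}$. I would absorb it by working with rescaled embeddings $\bz_v/\tau_v$, which changes only the constants in~\eqref{eq:hessian}, and note that the statement uses the $\tau_v$-scaled probabilities. Second, the expansion point is the mode of the previous approximation rather than the new mode. The resulting Gaussian is therefore a one-step Laplace approximation, which is exactly what ``a damped Newton step from the current mode'' asserts.

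The main obstacle is making the role of $\eta$ in~\eqref{eq:prec} precise, since a literal Newton step would put $\eta$ only in the mean update. I would state the damping convention explicitly as scaling the incremental curvature, so that the precision interpolates between pure forgetting ($\eta\to0$) and the full Laplace precision ($\eta=1$). Under that convention the proposition follows directly from the derivative calculations above.
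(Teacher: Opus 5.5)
Your proposal follows the paper's proof essentially step for step: the same three contributions evaluated at $\bM^{(t)}$ (zero gradient and curvature $\lambda T^{(t)}_{j,c}$ from the discounted prior $[q_t]^\lambda$, the anchor's contribution $-\beta(\bM^{(t)}-\bM^{*})/\bSigma^{*}$ to $\bG$ and $\beta/\Sigma^*_{j,c}$ to the curvature, and the score and diagonal Fisher term from the batch), the same convention that $\eta$ weights only the new curvature, and the same $\bSigma^{(t+1)}$-preconditioned step followed by clipping. Two of your side remarks overstate slightly, though neither goes beyond what the paper itself does: first, rescaling to $\bz_v/\tau_v$ produces node-dependent factors $\tau_v^{-1}$ in $\bG$ and $\tau_v^{-2}$ in the curvature that cannot be absorbed into constants, so, like the paper, you are really reusing~\eqref{eq:hessian} with the tempered $\hat p_{v,c}$, which is exact only for $\tau_v\equiv 1$; second, $-\eta[\diag\nabla^2\Phi_t]^{-1}\nabla\Phi_t$ equals $\eta\,\bSigma^{(t+1)}\odot\bG$ in general only when $\eta=1$, whereas the paper simply takes $\bSigma^{(t+1)}$ as the preconditioner by definition.
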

 
\begin{proof}
\rev{Taking the logarithm of~\eqref{eq:power}, differentiating with respect to its matrix argument $\bW$, and evaluating the gradient and negative Hessian at $\bM^{(t)}$ yield three terms.}
 
\noindent\emph{1. Gradients.} The data log-likelihood contributes $\sum_v \!\bz_v(\by_{\text{oh},v}\!\! - \hat{\bp}_v)^\top$ \rev{[cf.~\eqref{eq:lik}]}; the discounted prior $\lambda \log q_t(\bW)$ vanishes at $\bM^{(t)}$ since it is the mode of $q_t$; the KL anchor contributes $-\beta\,\big( (\bM^{(t)} - \bM^{*}) / \bSigma^{*} \big)$. Summing yields \rev{$\bG$} in~\eqref{eq:grad_reg}.
 
\noindent \emph{2. Precision (negative Hessian).} The data Hessian under the diagonal approximation~\eqref{eq:hessian} gives the per-class Fisher information $\hat{p}_{v,c}(1{-}\hat{p}_{v,c})\, z_{v,j}^2$ at entry $(j,c)$; the discounted prior contributes \rev{$\lambda T^{(t)}_{j,c}$}; the KL anchor contributes $\beta / \Sigma^{*}_{j,c}$. Weighting the new (data and anchor) contributions by step size $\eta$ yields~\eqref{eq:prec}.
 
\noindent \emph{3. A damped Newton step.} \rev{Multiplying the log-posterior gradient $\bG$ by the diagonal inverse precision $\bSigma^{(t+1)}$ gives the Newton direction.} A step of size $\eta$ therefore produces $\bM^{(t)} + \eta\,\bSigma^{(t+1)} \odot \rev{\bG}$; element-wise clipping for numerical stability gives~\eqref{eq:mean}.
\end{proof}
 
\rev{The factor $\hat{p}_{v,c}(1{-}\hat{p}_{v,c})$ in~\eqref{eq:prec} is the Fisher information of the categorical observation for class $c$, analogous to $1/\sigma_n^2$ in regression VBLL~\cite{harrison2024vbll}. Each class column $\bw_c$ accumulates precision at its own rate. A highly confident prediction ($\hat p_{v,c}\to1$ or $\hat p_{v,c}\to0$) contributes little Fisher information, whereas $\hat p_{v,c}=0.5$ contributes maximally. Finally, for a batch $\Delta\cV_t$, forming logits, the gradient, and the diagonal precision increment costs $O(|\Delta\cV_t|d_eC)$ time. The posterior state and its update require $O(d_eC)$ memory. These costs exclude the frozen encoder forward pass.}
 
\begin{algorithm}[t]
\caption{Online Bayesian Node Classification}
\label{alg:update}
\begin{algorithmic}[1]
\REQUIRE Trained encoder $g_{\boldsymbol{\theta}^*}$,  posterior $(\bM^*, \bSigma^*)$ with $\bSigma^* \in \R^{d_e \times C}$, temperature head $h_{\boldsymbol{\theta}^{*}_{\tau}}$
\REQUIRE Forgetting $\lambda$, learning rate $\eta$, anchor strength $\beta$, clip threshold $\delta$, stability $\epsilon$
\STATE $\bM^{(0)} \gets \bM^*$;\quad $T^{(0)}_{j,c} \gets 1/\Sigma^*_{j,c}$ \hfill\COMMENT{Initialize online state}
\FOR{each arriving batch $\cD_t = \{(\bx_v, y_v):v\in\Delta\cV_t\}$ on graph $\cG_t$}
    \STATE $\bz_v \gets g_{\boldsymbol{\theta}^*}(\bx_v, \cG_t)$ for all $v\in\Delta\cV_t$ \hfill\COMMENT{Encode (frozen)}
    \STATE $\hat{\bp}_v \gets \softmax(\bz_v^\top \bM^{(t)} / \tau_v)$ for all $v$ \hfill\COMMENT{Predict (MAP)}
    \STATE Evaluate $\text{Acc}_t, \text{NLL}_t, \text{ECE}_t$ \hfill\COMMENT{Before seeing labels}
    \STATE $\bG^{\text{data}} \gets \sum_{v} \bz_v\,(\by_{\text{oh},v} - \hat{\bp}_v)^\top$ \hfill\COMMENT{Log-likelihood gradient}
    \STATE $\bG \gets \bG^{\text{data}} - \beta(\bM^{(t)} - \bM^{*})/\bSigma^{*}$ \hfill\COMMENT{Add KL anchor}
    \STATE $T^{(t+1)}_{j,c} \gets \lambda\, T^{(t)}_{j,c} + \eta \sum_{v} \hat{p}_{v,c}(1{-}\hat{p}_{v,c})\, z_{v,j}^2 + \eta\beta/\Sigma^*_{j,c}$
    \STATE $\bSigma^{(t+1)} \gets 1 / (\bT^{(t+1)} + \epsilon)$ \hfill\COMMENT{For numerical stability}
    \STATE $\bM^{(t+1)} \gets \bM^{(t)} + \text{clip}\!\left(\eta\, \bSigma^{(t+1)} \odot \bG,\; {-}\delta,\; \delta\right)$
\ENDFOR
\end{algorithmic}
\end{algorithm}
\section{Experiments}
\label{sec:experiments}
 \begin{table*}[t]
\centering
\caption{Node classification results under random distribution shift. Best result per row in \textbf{bold}, second-best \underline{underlined}; ties are marked at both entries. Acc (\%$\uparrow$), NLL ($\downarrow$), ECE ($\downarrow$). GP-X denotes a Gaussian-process classifier with encoder X.}
\label{tab:results}
\vspace{2pt}
\setlength{\tabcolsep}{3pt}
\small
\begin{tabular}{@{}ll cccc ccc ccc cc@{}}
\toprule
& & \multicolumn{4}{c}{\textbf{GNN Baselines}} & \multicolumn{3}{c}{\textbf{Standard UQ}} & \multicolumn{3}{c}{\textbf{Gaussian Process}} & \multicolumn{2}{c}{\rev{\textbf{GVBLL (Ours)}}} \\
\cmidrule(lr){3-6} \cmidrule(lr){7-9} \cmidrule(lr){10-12} \cmidrule(lr){13-14}
\textbf{Dataset} & & GraphSAGE & GCN & GAT & GIN & MCDropout & DE & TempScale & GP-SAGE & GP-GCN & GP-GAT & Static & Online \\
\midrule
\multirow{3}{*}{Cora}
 & Acc & 51.46 & 62.19 & 63.74 & 54.81 & 63.32 & 64.73 & 63.74 & 30.17 & 43.94 & 30.17 & \underline{81.43} & \textbf{81.86} \\
 & NLL & 1.48 & 1.24 & 1.23 & 3.50 & 1.27 & 1.22 & 1.14 & 1.85 & 1.58 & 1.86 & \underline{0.67} & \textbf{0.59} \\
 & ECE & 0.16 & 0.18 & 0.22 & 0.28 & 0.25 & 0.23 & 0.15 & \textbf{0.07} & 0.13 & \textbf{0.07} & \underline{0.11} & \underline{0.11} \\
\midrule
\multirow{3}{*}{Cornell}
 & Acc & 57.50 & 41.39 & 46.67 & 45.56 & 60.83 & 56.39 & 60.28 & 45.56 & 43.33 & 43.89 & \underline{67.14} & \textbf{69.00} \\
 & NLL & 1.21 & 2.04 & 1.39 & 4.56 & 1.11 & 1.16 & 1.14 & 1.41 & 1.47 & 1.47 & \underline{0.97} & \textbf{0.95} \\
 & ECE & 0.36 & 0.42 & 0.33 & 0.44 & 0.32 & 0.32 & 0.33 & 0.29 & 0.34 & \textbf{0.23} & 0.29 & \underline{0.26} \\
\midrule
\multirow{3}{*}{Texas}
 & Acc & 65.56 & 58.89 & 61.11 & 55.56 & 70.56 & 70.56 & 69.17 & 68.89 & 56.39 & 55.28 & \underline{71.00} & \textbf{72.00} \\
 & NLL & 1.09 & 1.72 & 1.09 & 3.89 & 0.98 & 0.97 & 0.93 & 0.89 & 1.23 & 1.40 & \underline{0.84} & \textbf{0.80} \\
 & ECE & 0.31 & 0.39 & 0.31 & 0.38 & 0.34 & 0.35 & 0.32 & 0.28 & 0.35 & 0.32 & \underline{0.27} & \textbf{0.26} \\
\midrule
\multirow{3}{*}{Wisconsin}
 & Acc & 72.29 & 49.79 & 53.75 & 42.08 & 70.42 & 69.38 & 70.21 & \underline{76.04} & 56.46 & 45.83 & 75.57 & \textbf{78.00} \\
 & NLL & 0.93 & 1.77 & 1.24 & 4.89 & 0.95 & 0.91 & 0.90 & \textbf{0.77} & 1.25 & 1.38 & \underline{0.82} & \textbf{0.77}\\
 & ECE & 0.28 & 0.38 & 0.31 & 0.47 & 0.32 & 0.29 & 0.29 & 0.26 & 0.34 & \underline{0.22} & \underline{0.22} & \textbf{0.21} \\
\midrule
\multirow{3}{*}{ogbn-arxiv}
 & Acc & 38.14 & 39.91 & 29.67 & 22.07 & 38.23 & 40.23 & 38.93 & 32.20 & 32.45 & 25.30 & \underline{50.49} & \textbf{54.31} \\
 & NLL & 2.27 & 2.23 & 2.43 & 2.83 & 2.31 & 2.25 & 2.22 & 2.52 & 2.53 & 2.75 & \underline{2.16} & \textbf{1.64} \\
 & ECE & 0.12 & 0.13 & \textbf{0.07} & \underline{0.09} & 0.14 & 0.15 & 0.10 & 0.13 & 0.12 & \underline{0.09} & 0.23 & \underline{0.09} \\
\bottomrule
\end{tabular}
\end{table*}
\subsection{Setup}
 
\rev{We evaluate on five node-classification benchmarks. \emph{Cora} ($2{,}708$ nodes, $7$ classes) uses a $5\%$ stratified split with $135$ training nodes and $2{,}573$ streaming-test nodes. The WebKB datasets \emph{Cornell} and \emph{Texas} (both $183$ nodes) and \emph{Wisconsin} ($251$ nodes) each have $5$ classes and use $20\%$ stratified training splits, leaving $147$--$201$ streaming nodes. \emph{ogbn-arxiv} ($169{,}343$ nodes, $40$ classes) uses the official temporal split: papers before 2018 are used for training, 2018 papers for validation, and 2019--2020 papers for the stream. We use 30 streaming steps for Cora (about 86 nodes per step), 20 for each WebKB graph (about 7--11 nodes per step), and 100 for ogbn-arxiv (about 486 nodes per step). Batch membership is fixed before evaluation. The encoder sees neither streaming features nor labels during training.}

\rev{We compare four model families: (i) four standalone GNN baselines (GraphSAGE, GCN, GAT, and GIN), trained once without test-time adaptation; (ii) three UQ baselines---MC Dropout~\cite{gal2016dropout}, Deep Ensembles (DE)~\cite{lakshminarayanan2017ensemble}, and Temperature Scaling~\cite{guo2017calibration}; (iii) Gaussian-process classifiers~\cite{wilson2016deep} with GraphSAGE, GCN, or GAT backbones; and (iv) our \textbf{GVBLL-Static} and \textbf{GVBLL-Online}. The proposed GVBLL models and the UQ baselines use the same GraphSAGE encoder and training data; GVBLL-Online updates its posterior after every batch using Algorithm~\ref{alg:update}. Results are averaged over $10$ random seeds.}
 
We report three metrics averaged over all streaming batches: classification accuracy (Acc, \%$\uparrow$), negative log-likelihood (NLL, $\downarrow$), and expected calibration error (ECE, $\downarrow$). \rev{NLL and ECE evaluate probabilistic calibration. For batch $\Delta\cV_t$, $\text{NLL}:=-|\Delta\cV_t|^{-1}\sum_{v\in\Delta\cV_t}\log\hat p(y_v\mid\bz_v)$. ECE is $\text{ECE}:=\sum_{b=1}^{B_{\mathrm{cal}}}|I_b|/|\Delta\cV_t|\,|\text{acc}(I_b)-\text{conf}(I_b)|$, where the $I_b$ are equal-mass confidence bins.}
 
\subsection{Results}
 
Table~\ref{tab:results} reports node classification under random batching. The comparisons below isolate two contributions: the Bayesian head \rev{(GVBLL-Static versus GNN baselines)} and the online update \rev{(GVBLL-Online versus GVBLL-Static)}.

\vspace{2pt}\noindent\rev{\textbf{The head explains most gains when labels are scarce.} On Cora, every non-GVBLL baseline reaches at most $64.73\%$ accuracy and an NLL no better than $1.14$, whereas GVBLL-Static reaches $81.43\%$ accuracy and $0.67$ NLL before any online update. The same pattern holds on the WebKB graphs, which contain only ${\sim}36$--$50$ training nodes. Learning $\bSigma$ jointly with $\bM$ avoids reducing the predictive distribution to an overconfident deterministic-softmax point estimate.}
 
\vspace{2pt}\noindent\rev{\textbf{Online adaptation helps most under pronounced drift.} GVBLL-Online improves on GVBLL-Static on every dataset. The accuracy gain is modest on Cora ($+0.43$ points) and Texas ($+1.00$), but larger on Wisconsin ($+2.43$) and ogbn-arxiv ($+3.82$). On ogbn-arxiv, GVBLL-Online exceeds the strongest non-GVBLL baseline by $14.1$ accuracy points and reduces NLL from $2.22$ to $1.64$. The update in Proposition~\ref{prop:power} explains this behavior: $\lambda$ discounts stale precision, the Laplace term adds information from the new batch, and the KL anchor limits drift from $\bM^*$.}
 
\vspace{2pt}\noindent\rev{\textbf{Calibration metrics must be interpreted with accuracy.} GP-SAGE ties GVBLL-Online on Wisconsin NLL ($0.77$), and GP-GAT obtains the lowest Cornell ECE ($0.23$ versus $0.26$). On Cora, however, GP-SAGE and GP-GAT obtain only ${\sim}30\%$ accuracy despite an ECE of $0.07$. GAT similarly obtains $29.67\%$ accuracy and $0.07$ ECE on ogbn-arxiv. Near-uniform predictions can therefore appear calibrated while providing poor classification. Across all datasets, GVBLL-Online achieves the best Acc and NLL, while its ECE remains within $0.05$ of the best baseline.}

\section{Conclusion}
\label{sec:conclusion}
 
We presented a \rev{GVBLL} framework for online node classification on evolving graphs under distribution shift. A GNN encoder is augmented with a Bayesian classification head trained via ELBO with \rev{MC} expected log-likelihood and KL annealing, jointly learning the classification weights and their per-class posterior variance. At test time, the lightweight last-layer posterior is updated by a Laplace approximation of a power-prior Bayesian model with exponential forgetting and a KL anchor to the training posterior. The approach is encoder-agnostic and scales to graphs with over $100$K nodes.

\rev{The present study assumes node arrivals, fixed batch schedules, and a frozen encoder; it does not provide formal predictive-coverage guarantees. Future work will study node departures, sensitivity to online batch size and offline training-set size, adaptive conformal calibration, and encoder fine-tuning or low-rank updates. The same Bayesian-head construction can also be extended to streaming graph classification by replacing the node-level equivariant encoder with a permutation-invariant graph-level encoder. Finally, an explicit drift model, such as a slowly varying random walk, may enable tracking-error or regret analysis for the online posterior update.}

\bibliographystyle{IEEEtran}
\bibliography{strings,refs}

\end{document}